\newtheorem{definition}{Definition}
\newtheorem{theorem}[definition]{Theorem}
\newtheorem{corollary}[definition]{Corollary}
\newtheorem{proposition}[definition]{Proposition}
\newtheorem{example}[definition]{Example}
\newcommand{\avec}{{\bf a}}
\newcommand{\xvec}{{\bf x}}
\newcommand{\yvec}{{\bf y}}
\newcommand{\zvec}{{\bf z}}
\newcommand{\fvec}{{\bf f}}
\newcommand{\gvec}{{\bf g}}
\begin{document}

\title{On the Size and Width of the Decoder of a Boolean Threshold Autoencoder}
\author[1]{Tatsuya Akutsu}
\author[2]{Avraham A. Melkman}
\affil[1]{Bioinformatics Center, Institute for Chemical Research, Kyoto University}
\affil[2]{Department of Computer Science,
Ben-Gurion University of the Negev}

\maketitle

\begin{abstract}%
In this paper, we study the size and width of autoencoders consisting
of Boolean threshold functions,
where an autoencoder is a layered neural network
whose structure can be viewed as consisting of an \emph{encoder},
which compresses an input vector to a lower dimensional vector,
and a \emph{decoder} which transforms
the low-dimensional vector back to the original input vector exactly
(or approximately).
We focus on the decoder part, and
show that $\Omega(\sqrt{Dn/d})$ and $O(\sqrt{Dn})$ nodes are required
to transform $n$ vectors in $d$-dimensional binary space
to $D$-dimensional binary space.
We also show that the width can be reduced if we allow small errors,
where the error is defined as the average of the Hamming distance between
each vector input to the encoder part and the resulting vector output
by the decoder.

{\bf Keywords:}
Neural networks, Boolean functions, threshold functions, autoencoders.
\end{abstract}

\section{Introduction}
\label{sec:intro}
Extensive studies have been done on artificial neural networks
not only in artificial intelligence and machine learning but also
in theoretical computer science \cite{anthony01,sima93,siu95}.
Among various models of neural networks,
\emph{autoencoders} 
have recently attracted much attention due to their ability
to generate new data, and
have been applied to various areas including
image processing \cite{doersch16,tschannen18},
natural language processing \cite{tschannen18},
and drug discovery \cite{gomez18}.
An autoencoder is a layered neural network consisting of two parts,
an \emph{encoder} and a \emph{decoder},
where the former transforms an input vector to
a low-dimensional vector and the latter transforms the low-dimensional
vector to an output vector which should be
the same as or similar to the input vector
\cite{ackley85,baldi89,hinton06,baldi12}.
Therefore, an autoencoder maps input data to a low-dimensional
representation space.
Such a mapping is obtained via unsupervised learning that minimizes
the difference between input and output data by adjusting weights
(and some other parameters).

Although autoencoders perform dimensionality reduction,
a kind of data compression,
how data are compressed via autoencoders is still unclear.
In particular,
the quantitative relationship between the compressive power and
the numbers of nodes and layers in autoencoders had been unclear.
In order to study the relationship, Melkman et al. analyzed
the relations between
the architecture (e.g., the numbers of nodes and layers) of networks
and their compression ratios \cite{melkman21},
using a layered \emph{Boolean threshold network},
which is a discrete model of neural networks.
A Boolean threshold network is equivalent to a \emph{threshold circuit}
\cite{anthony01,sima93,siu95}
in which
each node takes on values that are either 1 (active) or 0 (inactive)
and the activation rule for each node is given by a Boolean threshold function.
In \cite{melkman21} several architectures of autoencoders were presented
that map $n$ $D$-dimensional binary input vectors
into $d$-dimensional binary space and then recover the original input vectors.
In particular, they showed the following architectures for $d = \lceil \log n \rceil$ (or, almost equivalently, $d = 2 \lceil \log \sqrt{n} \rceil$):
\begin{itemize}
\item a 4-layers encoder with $O(\sqrt{n}+D)$ nodes;
\item a 5-layers autoencoder with $D/n/d/n/D$ architecture,
where each parameter means
the number of nodes in the corresponding layer;
\item a 7-layers autoencoder with $O(D \sqrt{n})$ nodes;
\item a decoder with depth $n+1$ and width $O(D)$,
where the width is the maximum number of nodes per layer (except for the input and
output layers) and the depth is the number of layers minus one.
\end{itemize}
However, it was unclear whether or not these results are optimal (or near optimal).

In this paper, we focus on the decoder part because the decoders (in the
above-quoted results)
use $O(D \sqrt{n})$, $O(n+D)$, or more nodes whereas the encoder uses
only
$O(\sqrt{n}+D)$ nodes.
We show an $\Omega(\sqrt{Dn/d})$ lower bound on the size of
the perfect decoder, where a decoder is called \emph{perfect} if
it can always
recover the original input vectors exactly.
To our knowledge,
this is the first lower bound result on the size (i.e., the number of nodes)
of the autoencoder.
As a positive result,
we show that there exists a perfect decoder with
width $\max(\lceil n/B \rceil+B,BD)$ and a constant depth,
where $B$ is an arbitrary integer larger than 1.
By letting $B = \lceil \sqrt{n/D} \rceil$,
we obtain an $O(\sqrt{Dn})$ upper bound
on the size and width of the decoder, which is relatively close
to the lower bound and improves the previous $O(D \sqrt{n})$ upper
bound.
In order to
construct decoders that have even smaller width we permit them
to output vectors that are in error,
where the error is defined as the average of the Hamming distance between each vector
input to the encoder part and the resulting vector output by the decoder.
We show that there exists a decoder with
width $\max(\lceil n/B \rceil +B,(B-1)D+1)$ and a constant depth whose
error is at most $D({\frac {1}{B 2^B}}+{\frac 1 n})$.

\section{Preliminaries}
\label{sec:pre}

A function $f$: $\{0,1\}^h \rightarrow \{0,1\}$ is called
a \emph{Boolean threshold function} if it is represented as
\begin{eqnarray*}
f(\xvec) = \left\{
\begin{array}{ll}
1, & \avec \cdot \xvec \geq \theta,\\
0, & \mbox{otherwise,}
\end{array}
\right.
\end{eqnarray*}
for some $(\avec,\theta)$,
where $\avec$ is an $h$-dimensional integer vector and
$\theta$ is an integer.
We will also denote the same function as
$[\avec \cdot \xvec \geq \theta]$.
An acyclic network is called a \emph{Boolean threshold network} (BTN)
if all activation functions in the network are Boolean threshold
functions.

In this paper, we only consider layered BTNs in which
nodes are divided into $L$-layers and
each node in the $i$-th layer has inputs only from nodes in
the ($i-1$)th layer ($i=1,\ldots,L-1$).
Then, the states of nodes in the $i$-th layer can be represented as
a $W_i$-dimensional binary vector where $W_i$ is the number of nodes
in the $i$-th layer and is called the \emph{width} of the layer.
A layered BTN is represented as
$\yvec = \fvec^{(L-1)}(\fvec^{(L-2)}( \cdots \fvec^{(1)}(\xvec) \cdots ))$,
where
$\xvec$ and $\yvec$ are the input and output vectors, respectively.
and $\fvec^{(i)}$ is a list of activation functions for the ($i+1$)th layer.
The $0$-th and ($L-1$)-th layers are called the \emph{input and output layers},
respectively,
and the corresponding nodes are called \emph{input and output nodes}, respectively.
The \emph{size}, \emph{depth}, \emph{width} of a layered BTN are defined as
the number of nodes, the number of layers minus one,
and the maximum number of nodes in a layer (excepting the input
and output layers), respectively.
When we consider autoencoders,
one layer ($k$-th layer where $k \in \{1,\ldots,L-2\}$)
is specified as the \emph{middle layer}, and the nodes in this layer
are called the \emph{middle nodes} (see also Fig.~\ref{fig:5-layer}).
Then, the \emph{middle vector} $\zvec$, \emph{encoder} $\fvec$, 
and \emph{decoder} $\gvec$ are defined by
\begin{eqnarray*}
\zvec & = & \fvec^{(k-1)}(\fvec^{(k-2)}(\cdots \fvec^{(1)}(\xvec) \cdots )) = \fvec(\xvec),\\
\yvec & = & \fvec^{(L-1)}(\fvec^{(L-2)}(\cdots \fvec^{(k)}(\zvec) \cdots )) = \gvec(\zvec).
\end{eqnarray*}
Since the input vectors should be recovered from the middle vectors at the output layer,
we assume that $\xvec$ and $\yvec$ are $D$-dimensional binary vectors
and $\zvec$ is a $d$-dimensional binary vector with $d \leq D$.
A list of functions is also referred to as a \emph{mapping}.
The $i$-th element of a vector $\xvec$ will be denoted
by $x_i$, which is also used to denote the node corresponding to this element.
Similarly, for each mapping $\fvec$,
$f_i$ denotes the $i$-th function.

\begin{figure}[ht]
\centering
\includegraphics[width=12cm]{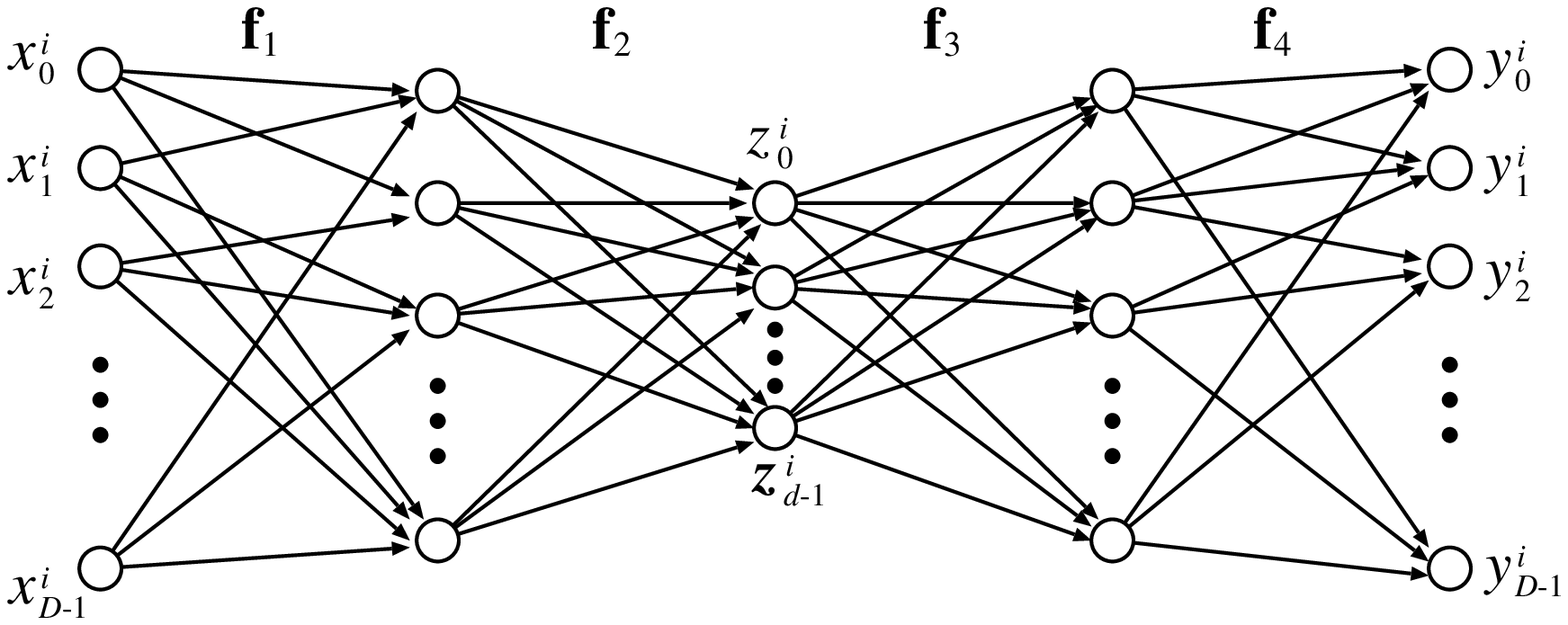}
\caption{Architecture of a five-layer autoencoder,
where $\zvec = \fvec(\xvec)=\fvec_2(\fvec_1(\xvec))$ and
$\yvec = \gvec(\zvec) = \fvec_4(\fvec_3(\zvec))$.}
\label{fig:5-layer}
\end{figure}

Let
$X_n = \{\xvec^0,\ldots,\xvec^{n-1}\}$
be a set of $n$ $D$-dimensional binary input vectors
that are all different.
We define perfect encoder, decoder, autoencoder as follows \cite{melkman21}.

\begin{definition}
A mapping $\fvec$: $\{0,1\}^D \rightarrow \{0,1\}^d$
is called a \emph{perfect encoder} for $X_n$
if $\fvec(\xvec^i) \neq$ $\fvec(\xvec^j)$ holds for all $i \neq j$.
\end{definition}

\begin{definition}
A pair of mappings $(\fvec,\gvec)$ with
$\fvec$: $\{0,1\}^D \rightarrow \{0,1\}^d$
and
$\gvec$: $\{0,1\}^d \rightarrow \{0,1\}^D$
is called a \emph{perfect autoencoder} if
$\gvec(\fvec(\xvec^i))=\xvec^i$ holds for all $\xvec^i \in X_n$.
Furthermore, such $\gvec$ is called a \emph{perfect decoder}.
\end{definition}

Note that a perfect decoder exists only if there exists a perfect
autoencoder.
Furthermore, it is easily seen from the definitions
that if $(\fvec,\gvec)$ is a perfect autoencoder,
$\fvec$ is a perfect encoder.

\begin{example}
Let $X_4 = \{\xvec^0,\xvec^1,\xvec^2,\xvec^3\}$ where
$\xvec^0=(0,0,0)$,
$\xvec^1=(1,0,0)$,
$\xvec^2=(1,0,1)$,
$\xvec^3=(1,1,1)$.
Let $D=3$ and $d=2$.
Define $\zvec=\fvec(\xvec)$ and $\yvec=\gvec(\zvec)$ by
        \begin{eqnarray*}
                f_0(\xvec) & = & [x_0 + x_1 - x_2 \geq 1],~~~
                f_1(\xvec) ~=~ [x_2 \geq 1],\\
                g_0(\zvec) & = & [z_0 + z_1 \geq 1],~~~
                g_1(\zvec) ~=~ [z_0 + z_1 \geq 2],~~~
                g_2(\zvec) ~=~ [z_1 \geq 1].
        \end{eqnarray*}
This pair of mappings has the following truth table, which shows it to be
a perfect autoencoder.

\smallskip

\begin{center}
        \begin{tabular}{|lll|ll|lll|}
                        \hline
                        $x_0$ & $x_1$ & $x_2$ & $z_0$ & $z_1$ & $y_0$ & $y_1$ & $y_2$\\
                        \hline
                        0 & 0 & 0 & 0 & 0 & 0 & 0 & 0\\
                        1 & 0 & 0 & 1 & 0 & 1 & 0 & 0\\
                        1 & 0 & 1 & 0 & 1 & 1 & 0 & 1\\
                        1 & 1 & 1 & 1 & 1 & 1 & 1 & 1\\
                        \hline
        \end{tabular}
\end{center}
\end{example}

\section{A Lower Bound on the Size of the Decoder}
\label{sec:lower-bound}

In this section we derive a lower bound on the number of threshold units
(i.e., the number of nodes except those in the input layer)
in the decoder BTN of a Boolean threshold autoencoder
which perfectly encodes every possible set of $n$ $D$-dimensional vectors, and which has a middle layer whose size is less than one third the size of the output layer. 
Note that $n\leq 2^d$, since $n$ vectors are perfectly encoded. 

Denote the number of threshold units of the decoder-BTN by $N$, with the 
$i$-th  unit having $d_i$ inputs,
where $N$ includes the number of output nodes but does not include
the number of middle nodes.

\begin{theorem}
There exists a set $X_n$ for which there is no perfect autoencoder
with a middle layer of size $d$ and a decoder of size less than 
$\sqrt{\frac{D-1}{3d}}\sqrt{n}$.
\label{thm:lower-bound}
\end{theorem}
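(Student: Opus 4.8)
The plan is to prove the lower bound by a counting (incompressibility) argument: I will show that the family of decoders realizable with few threshold units is too small to serve every possible $X_n$, so at least one $X_n$ must force a large decoder. Throughout, the encoder may be taken to be an arbitrary mapping, which only strengthens the conclusion.

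First I would reduce the question to the \emph{image} of the decoder. If $(\fvec,\gvec)$ is a perfect autoencoder for $X_n$, then $\xvec^i=\gvec(\fvec(\xvec^i))$ for every $i$, so $X_n\subseteq\mathrm{Im}(\gvec)$. Since the domain of $\gvec$ is $\{0,1\}^d$, we have $|\mathrm{Im}(\gvec)|\le 2^d$, and hence a single decoder $\gvec$ can serve at most $\binom{2^d}{n}$ of the candidate sets. As there are $\binom{2^D}{n}$ candidate sets in total, it suffices to show that the number of distinct decoders of size less than $N_0:=\sqrt{(D-1)n/(3d)}$, multiplied by $\binom{2^d}{n}$, is strictly smaller than $\binom{2^D}{n}$; then some $X_n$ is served by no such decoder.

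The heart of the argument is bounding the number of distinct size-$N$ decoders. The key observation is that every unit computes a function of the $d$ middle bits, so as $\zvec$ ranges over the $2^d$ possible middle vectors, the inputs seen by the $i$-th unit take at most $2^d$ distinct values, i.e.\ at most $2^d$ points in the space spanned by its (at most $d+i-1$) predecessors. The unit therefore realizes one of the linearly separable dichotomies of these $\le 2^d$ points, and the classical function-counting bound gives at most $(2^d)^{d+i}$ choices for the $i$-th unit --- a bound that already absorbs the choice of wiring, since reading only a subset of predecessors is a threshold function with zero weights on the rest. Multiplying over the units in topological order yields at most $\prod_{i=1}^{N}(2^d)^{d+i}=2^{\,d\sum_{i=1}^{N}(d+i)}$ distinct decoders.

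Finally I would carry out the comparison. Since the ratio $(2^D-i)/(2^d-i)$ is increasing in $i$, each factor of $\binom{2^D}{n}/\binom{2^d}{n}=\prod_{i=0}^{n-1}(2^D-i)/(2^d-i)$ is at least $2^{D-d}$, so the gap is at least $2^{(D-d)n}$ and it is enough to guarantee $d\sum_{i=1}^{N}(d+i)<(D-d)n$. The dominant term of the left side is $\tfrac12 dN^2$; bounding the lower-order terms using $N\ge D>3d$ and invoking the hypothesis $d<D/3$ (which makes $D-d$ a large fraction of $D-1$) produces the clean threshold $N_0=\sqrt{(D-1)n/(3d)}$ below which the decoder count is deficient. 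The main obstacle I anticipate is exactly this last bookkeeping: the per-unit bound $(2^d)^{d+i}$ hides lower-order factors (the ``$+1$''s in the dichotomy count, and the $d^2N$ and $dN$ contributions), and pushing the final constant down to $1/3$ requires the $d<D/3$ assumption to create enough slack; stating a correct and sufficiently clean form of the function-counting bound for integer-weight threshold units over hypercube inputs is the step that demands the most care.
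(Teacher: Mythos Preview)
Your proposal is correct and follows essentially the same counting argument as the paper: both compare $\binom{2^D}{n}$ against the number of decoder functions times the number of ways a decoder can cover an $n$-set, using the Anthony-type bound $2^{d\cdot d_i+1}$ on the number of functions realizable by a unit of in-degree $d_i$ in a network with $d$ inputs. The only cosmetic differences are that the paper uses the cruder estimate $d_i\le N$ (yielding $N+dN^2$ in the exponent) and multiplies by the ordered count $(2^d)^n$ instead of your $\binom{2^d}{n}$, which makes the final algebra with the $d\le (D-1)/3$ hypothesis slightly shorter than the bookkeeping you anticipate.
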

\begin{proof}
The idea is to calculate an upper bound on the total number of different sets of $n$ vectors 
that can be generated at the output layer of the autoencoder from 
all possible
sets of $d$-dimensional vectors, input to the middle layer, by all possible different
decoder BTNs. Let's denote that number $PD$ (the in-Principle Decoded sets). 
Since the autoencoder is perfect, $PD$ is not smaller than the number of different sets of $D$-dimensional output (=input) vectors of the autoencoder, each of size $n$, which 
is $\binom{2^D}{n}
$.

Next we derive an upper bound on $PD$.
Each threshold unit computes a Boolean function of 
the $d$ inputs (in the middle-layer). 
In the course of proving Theorem 7.4 of \cite{anthony01}, 
which provides
a lower bound on the size of the universal networks,
it is shown that the number of different functions representable
by a threshold unit with in-degree $d_i$
in an acyclic BTN which has $d$ input nodes is at most
$2^{d d_i + 1}$.
Therefore, an upper bound on
the total number of functions from $\{0,1\}^d$ to $\{0,1\}^D$
computable by a decoder BTN with $N$ nodes  is
\begin{equation*}
\prod_{i=1}^N 2^{d d_i+1}  =  2^{N+d \sum_{i=0}^{N-1} d_i}
\leq  2^{N + dN^2}.
\end{equation*}
Since the number of all possible ordered sets of $n$ $d$-dimensional vectors
in the middle layer does not exceed $2^d!/(2^d-n)! \leq (2^d)^n$ we conclude that

\begin{eqnarray*}
{\frac {(2^D-n)^n} {n^n}}\leq \binom{2^D}{n} \leq PD \leq 
\left[ (2^d)^n \right] \cdot \left[ 2^{N + d N^2} \right]
\end{eqnarray*}
Taking base 2 logarithms of both sides shows that the parameters of the decoder satisfy
$n d + N + d N^2  \geq  n \log(2^D-n) - n \log n$.
Noting that $n\leq 2^d < 2^{D-1}$
yields $N + d N^2  >  n (D-1) - 2 n d$.
Applying the precondition
$d \leq {\frac {D-1}{3}}$ 
results in the inequality 
\begin{eqnarray*}
N + d N^2 > {\frac n 3} (D-1) ,
\end{eqnarray*}
from which the theorem follows.
\end{proof}

{\bf Remarks:}
\begin{itemize}
\item This lower bound is meaningful only if 
$\sqrt{\frac{D-1}{3d}}\sqrt{n} \geq D$ because the decoder contains
$D$ output nodes.
\item 
In the proof $d_i$ was replaced by its upper bound $N$.
Therefore, the lower bound stated in the Theorem 
holds for any acyclic decoder.
\item 
The stated lower bound becomes an $\Omega(\sqrt{Dn/d})$ lower bound
on the widths of layered BTN decoders of constant depth.
\end{itemize}
%
%
%
\section{Decoder with Width $\max(\lceil n/B \rceil +B,BD)$}
\label{sec:perfect-decoder}

In this section we present the architecture of a decoder BTN 
that is able to decode perfectly any set of $n$ $D$-dimensional vectors
that has been encoded into a layer of size $d$
(so that the encoder and decoder together perfectly autoencode every such set),
where $d = \lceil \log n \rceil$.

We will use $\beta_0,\ldots,\beta_{d-1}$ to refer to the middle nodes
as well as to their values, with $\beta$ denoting the vector $(\beta_0,\ldots,\beta_{d-1})$.
$Int(\beta)$ denotes the integer encoded by $\beta$, i.e. 
$Int(\beta)=\sum_{i=0}^{d-1}\beta_i 2^{d-i-1}$.

%

Let $B$ be an integer greater than 1.
For ease of exposition we assume that $n$ can be divided by $B$ and
use $n_B$ to denote $n/B$.
Otherwise we can let $n_B = \lceil n/B \rceil$.
In this section we detail the architecture of a $d/{\frac n B}+B / BD / D$ 
decoder (with $B=2^K$ for some integer $K$), 
see Figure \ref{fig:width1}. In addition to $\beta$ it has the following nodes:

\begin{itemize}
\item $\gamma_0,\gamma_1,\ldots,\gamma_{n_B-1},\gamma_{n_B},\gamma_{n_B+1},\ldots,\gamma_{n_B+B-1}$,
\item $y_{j,b}$ for $j=0,\ldots,D-1$ and $b=0,\ldots,B-1$,
\item $y_{j}$ for $j=0,\ldots,D-1$, the output nodes.
\end{itemize}

The activation function of the $\gamma_i$ are:
\begin{eqnarray*}
\gamma_i & = & [\lfloor {\frac {Int(\beta)} {B} } \rfloor = i],~~~\mbox{for $i=0,\ldots,n_B-1$},\\
\gamma_{n_B+h} & = & [Int(\beta)=h~(\mbox{mod $B$})],~~~\mbox{for $h=0,\ldots,B-1$},
\end{eqnarray*}
where $[Int(\beta)=k]$ takes value 1
if $Int(\beta)=k$, and value 0 otherwise.
Note that division and modulo operations can be done by
threshold circuits of width $O(poly(\log n))$ and depth 5 \cite{siu93}
(recall that $\beta$ is a vector of dimension $O(\log n)$),
a negligible increase to the $O(\sqrt{Dn})$ width of our network.
Furthermore, if $B = 2^K$ for some integer $K$,
we do not need such circuits
because we can simply use the first $d-K$ bits of $\beta$
to represent $Int(\beta)/B$ and 
the remaining $K$ bits to represent $Int(\beta) \mbox{ mod $B$}$.
Note also that $[Int(\beta)=k]$ can be calculated by
using a single unit with an activation function
$\avec \cdot \beta \geq \theta$ such that
\begin{eqnarray*}
a_i & = & \left\{
\begin{array}{ll}
1, & \mbox{if $k_i = 1$},\\
-1, & \mbox{otherwise},
\end{array}
\right.\\
\theta & = & \sum_i k_i,
\end{eqnarray*}
where $k_0 k_1 \cdots k_{d-1}$ is the binary representation of
an integer $k$.

The activation function of $y_{j,b}$ is 
$$
y_{j,b} = [\sum_{i=0}^{n_B+B-1} w_{i,j,b} \gamma_i \geq 2],
$$
where $w_{i,j,b}$ is given by
\begin{eqnarray*}
w_{h,j,b} & = & x_j^{Bh+b}~~~\mbox{for $h=0,\ldots,n_B-1$ and for $b=0,\ldots,B-1$},\\
w_{n_B+h,j,b} & = & [h=b]~~~\mbox{for $h=0,\ldots,B-1$ and for $b=0,\ldots,B-1$}.
\end{eqnarray*}

Numbering the $n$ input/output vectors so that each
$\xvec^k$
is mapped to $\beta$ with 
$Int(\beta)=k$, it is not difficult to verify that when
$\xvec^k$
 is input to the autoencoder, with $k=B\ell+r,\ 0\leq r \leq B-1$, then $y_{j,b}=[x^{B\ell+b}_j+[b=r]\geq 2]$.
Hence $y_{j,b}=1$ if and only if $b=r$ and $x^k_j=1$.

The activation function of $y_{j}$ is 
$$
y_j = [\sum_{b=0}^{B-1} y_{j,b} \geq 1],
$$
which ensures that on input
$\xvec^k$
the value of $y_j$ is $x^k_j$,
i.e. the decoder is perfect.

Although we assumed that $\xvec^k$ is mapped to $\beta$ with
$Int(\beta)=k$,
this assumption does not pose any restriction on an encoder
because arbitrary permutations of input vectors can be considered.
Therefore, we have:

\begin{theorem}
\label{thm:perfect-decoder}
For any perfect encoder that maps $X_n$
one-to-one
to $d$-dimensional binary vectors,
with $d=\lceil \log n \rceil$ 
and $n$ sufficiently large,
there exists a perfect decoder of a constant depth and width at most
$\max(\lceil n/B \rceil+B,BD)$ 
where $B$ is any integer such that $1 < B \leq n$.
\end{theorem}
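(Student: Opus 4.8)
The plan is to exhibit explicitly the three-layer decoder BTN sketched in this section and to verify that it recovers every $\xvec^k$ exactly. The first step is a harmless relabeling: since the given encoder $\fvec$ is one-to-one on $X_n$, its $n$ distinct middle vectors can be reindexed by composing $\fvec$ with a fixed permutation of $\{0,1\}^d$, so that without loss of generality $\xvec^k$ is the vector mapped to the $\beta$ with $Int(\beta)=k$. A decoder built for this canonical labeling then serves as a perfect decoder for the original encoder, since the two differ only by that fixed permutation absorbed into $\fvec$.

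The second step is to realize the $\gamma$-layer, which splits the index $Int(\beta)$ into its quotient and remainder modulo $B$. I would define the first $n_B=\lceil n/B\rceil$ nodes as the quotient indicators $\gamma_i=[\lfloor Int(\beta)/B\rfloor=i]$ and the last $B$ nodes as the remainder indicators $\gamma_{n_B+h}=[Int(\beta)\equiv h \pmod{B}]$, each equality test being computed by a single threshold gate via the weight/threshold construction given above. When $B=2^K$ these two quantities are read off directly from the top $d-K$ and bottom $K$ bits of $\beta$, so no auxiliary computation is needed; for general $B$ I would invoke the depth-$5$, $O(\mathrm{poly}(\log n))$-width division and modulo threshold circuits of \cite{siu93}.

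The third step is the $y_{j,b}$-layer together with the output layer, and carrying out the key verification. Setting $w_{h,j,b}=x_j^{Bh+b}$ on the quotient indicators and $w_{n_B+h,j,b}=[h=b]$ on the remainder indicators, with threshold $2$, I would check that on input $\xvec^k$ with $k=B\ell+r$ exactly one quotient indicator ($\gamma_\ell$) and exactly one remainder indicator ($\gamma_{n_B+r}$) fire, so that the weighted sum at $y_{j,b}$ equals $x_j^{B\ell+b}+[b=r]$; this reaches $2$ precisely when $b=r$ and $x_j^k=1$. The output gate $y_j=[\sum_b y_{j,b}\geq 1]$ then returns $x_j^k$, establishing perfection.

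Finally I would tally the resources. The $\gamma$-layer has $\lceil n/B\rceil+B$ nodes and the $y_{j,b}$-layer has $BD$ nodes, while the remaining layers are output-only; for general $B$ the division and modulo circuits form a few additional layers of only $O(\mathrm{poly}(\log n))$ width each, which do not affect the maximum since $\lceil n/B\rceil+B\geq 2\sqrt{n}$ dominates for $n$ sufficiently large. The overall width is therefore $\max(\lceil n/B\rceil+B,BD)$ and the depth is a constant. I expect the only genuinely delicate point to be the threshold-$2$ verification in the third step --- confirming that the quotient/remainder decomposition makes exactly the single intended gate $y_{j,r}$ carry the bit $x_j^k$ while all others stay silent --- together with checking that the auxiliary division circuit's width never breaches the stated bound, which is exactly what the ``$n$ sufficiently large'' hypothesis secures.
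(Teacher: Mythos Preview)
Your proposal is correct and follows essentially the same construction as the paper: the same relabeling of $X_n$ so that $\xvec^k$ encodes to $Int(\beta)=k$, the same quotient/remainder $\gamma$-layer, the same $y_{j,b}$ gates with threshold $2$ and weights $x_j^{Bh+b}$ on the quotient indicators and $[h=b]$ on the remainder indicators, and the same OR output layer. Your resource accounting, including the handling of the division/modulo subcircuit and the role of the ``$n$ sufficiently large'' hypothesis, also matches the paper's. The only minor imprecision is in your phrasing of the relabeling step: you speak of ``composing $\fvec$ with a fixed permutation of $\{0,1\}^d$'' and absorbing it into $\fvec$, but the encoder is given and cannot be altered; what actually happens (and what the paper does) is simply a renaming of the input vectors $\xvec^0,\ldots,\xvec^{n-1}$, which changes only the decoder weights $w_{h,j,b}=x_j^{Bh+b}$.
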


Note that if $B = 2^K$ for some integer $K$,
we do not need a condition of ``sufficiently large $n$''
(precisely, $n \gg (\log n)^k$ for some constant $k$).
This condition is needed only for calculating $\lceil Int(\beta)/B \rceil$ and
($Int(\beta)\mod B$), the details of which are not relevant to this paper.

\begin{figure}[ht]
\centering
\includegraphics[width=6cm]{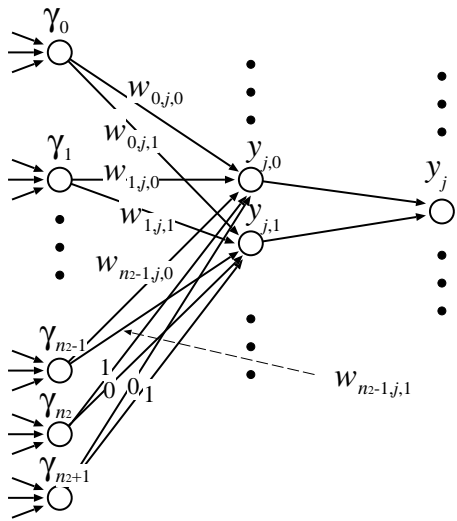}
\caption{Network structure of a decoder with width $\max(\lceil n/B \rceil+B,BD)$
for $B=2$.}
\label{fig:width1}
\end{figure}

%
Upon choosing $B = \lceil \sqrt{{\frac n D}} \rceil$ in
Theorem~\ref{thm:perfect-decoder}
 we obtain the following corollary.
\begin{corollary}
Suppose $n > D$.
Then, for any perfect encoder that maps $X_n$
one-to-one
to $d$-dimensional binary vectors
with $d = \lceil \log n \rceil$,
there exists a constant depth perfect decoder width $O(\sqrt{D n})$ nodes.
\end{corollary}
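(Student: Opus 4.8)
The plan is to obtain this corollary as a direct specialization of Theorem~\ref{thm:perfect-decoder}, instantiating the free parameter $B$ at the value that balances the two terms of the width expression $\max(\lceil n/B \rceil + B,\, BD)$. Regarding this quantity as a function of $B$, the first term decreases like $n/B$ while the second grows linearly in $B$, so the two are balanced when $n/B \approx BD$, i.e. $B \approx \sqrt{n/D}$. Accordingly, the choice prescribed in the statement preceding the corollary is $B = \lceil \sqrt{n/D} \rceil$, and the whole argument reduces to checking the hypotheses of the theorem and then estimating each of the two terms.

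First I would verify that this $B$ is admissible, i.e. that $1 < B \leq n$. Since the hypothesis is $n > D$ and $D \geq 1$, we have $n/D > 1$, hence $\sqrt{n/D} > 1$ and therefore $B = \lceil \sqrt{n/D} \rceil \geq 2 > 1$. For the upper bound, $\sqrt{n/D} \leq \sqrt{n} \leq n$, and taking ceilings preserves $B \leq n$ (the finitely many small cases being checked directly). With the hypotheses met, Theorem~\ref{thm:perfect-decoder} supplies a constant-depth perfect decoder of width at most $\max(\lceil n/B \rceil + B,\, BD)$, and it remains only to show this maximum is $O(\sqrt{Dn})$.

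Next I would bound the two terms using $\sqrt{n/D} \leq B \leq \sqrt{n/D} + 1$. For the second term, $BD \leq (\sqrt{n/D}+1)D = \sqrt{Dn} + D$, and since $n > D$ gives $D < \sqrt{Dn}$, this is $O(\sqrt{Dn})$. For the first term, $B \geq \sqrt{n/D}$ yields $n/B \leq n/\sqrt{n/D} = \sqrt{Dn}$, so $\lceil n/B \rceil \leq \sqrt{Dn} + 1$; moreover $B \leq \sqrt{n/D} + 1 \leq \sqrt{n} + 1 \leq \sqrt{Dn} + 1$ because $D \geq 1$. Hence $\lceil n/B \rceil + B = O(\sqrt{Dn})$ as well. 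Combining the two estimates gives $\max(\lceil n/B \rceil + B,\, BD) = O(\sqrt{Dn})$, which together with the constant depth furnished by the theorem establishes the corollary.

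I do not anticipate a genuine obstacle here: the result is a routine substitution followed by elementary estimates. The only points requiring a little care are the clean handling of the ceiling functions (so that the $+1$ slack introduced by $\lceil \cdot \rceil$ is absorbed into the $O(\cdot)$), and the explicit use of the hypothesis $n > D$ both to guarantee the precondition $B > 1$ and to subsume the lower-order term $D$ into $\sqrt{Dn}$.
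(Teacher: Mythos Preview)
Your proposal is correct and follows exactly the paper's approach: the corollary is obtained simply by substituting $B = \lceil \sqrt{n/D} \rceil$ into Theorem~\ref{thm:perfect-decoder}. You have supplied more detail than the paper (which states only the substitution), but the verification of $1 < B \leq n$ and the asymptotic estimates for both terms of the maximum are straightforward and correctly done.
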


Interestingly, this corollary improves 
the $O(D\sqrt{n})$ size of a perfect decoder given in \cite{melkman21},
while using a simpler architecture.
Furthermore, the order of this upper bound is close to that of
the lower bound given in Theorem~\ref{thm:lower-bound}.

\section{Approximate Decoders}
\label{sec:approx}

In Section~\ref{sec:perfect-decoder},
we showed a construction of a perfect decoder
of width $\max(\lceil n/B \rceil+B,BD)$.
It seems not easy to reduce the width using a constant depth.
Here we show that we can reduce the width 
if small errors between the original input vectors
(to the encoder) and the output vectors are permissible.
This assumption is reasonable because the input and output vectors
are not necessarily the same in practice.

In order to measure the error, we employ the Hamming distance
between the original
vector $\xvec$ and the decoded vector $\yvec$,
and denote it by $dist(\xvec,\yvec)$.
The average Hamming distance $dist(X,Y)$ between
the sets of vectors
$X=(\xvec^0,\ldots,\xvec^{n-1})$ and
$Y=(\yvec^0,\ldots,\yvec^{n-1})$
is defined as
$$
dist(X,Y) = {\frac 1 n} \sum_{i=0}^{n-1} dist(\xvec^i,\yvec^i).
$$

\subsection{Case of $B=3$}
\label{sec:approx-b-3}

As in Section~\ref{sec:perfect-decoder},
we divide the input vectors into $B$ sets.
In this subsection, we consider the case of $B=3$ for ease of
presentation.
We will extend it to a general $B$ in the next subsection.

We assume w.l.o.g. that ${\frac n 3}$ is a positive integer and
define $n_3 = {\frac n 3}$.
We construct $\gamma_i$ nodes $(i=0,\ldots,n_3+2)$,
$y_{i,0}$ nodes $(i=0,\ldots,D-1$), and $y_{i,1}$ nodes $(i=0,\ldots,D-1)$ 
as follows (see Fig.~\ref{fig:approx-decode-b3}).

Recall that $Int(\beta)$ denotes the integer coded by a binary vector $\beta$.
Then, the activation function of $\gamma_i$ is defined by
\begin{eqnarray*}
\gamma_i & = & [ \lfloor {\frac {Int(\beta)}{3}} \rfloor = i],~~~~
\mbox{for $i=0,\ldots,n_3-1$},\\
\gamma_{n_3} & = & [ Int(\beta) = 0 ~(\mbox{mod}~3)],\\
\gamma_{n_3+1} & = & [ Int(\beta) = 1 ~(\mbox{mod}~3)],\\
\gamma_{n_3+2} & = & [ Int(\beta) = 2 ~(\mbox{mod}~3)].
\end{eqnarray*}
The activation function for $y_{j,h}, j=0,\ldots, D-1, h=0,1$,
is
$$
y_{j,h}=[\sum^{n_3+2}_{i=0} w^j_{i,h} \gamma_i \geq 0].
$$

The values of $w^j_{i,h}$s ($i=0,\ldots,n_3-1$, $h=0,1$) depend on $x^{3i}_j x^{3i+1}_j x^{3i+2}_j$
and are determined as in Table~\ref{tbl:definition-w},
where each weight $w^j_{i,h}$ is shown by its binary representation
(e.g., $w^j_{i,0} = 4$ and $w^j_{i,1}=1$
if $x^{3i}_j x^{3i+1}_j x^{3i+2}_j = 010$).
Note that the corresponding values of the $y_{j,h}$s and $z_j$s are also shown
in the table.
The values of $w^j_{i,h}$s ($i=n_3,n_3+1,n_3+2$, $h=0,1$) are defined by
\begin{eqnarray*}
w^j_{{n_3},0} = -2, & w^j_{{n_3+1},0}=-4,~~~w^j_{{n_3+2},0}=0,\\
w^j_{{n_3},1} = -2, & w^j_{{n_3+1},1}=0,~~~~w^j_{{n_3+2},1}=-4 ,
\end{eqnarray*}
where these $w^j_{i,h}$s do not depend on $j$.

Then, for each $j=0,\ldots,D-1$, we construct a node $z_j$ with
the activation function defined by
$$
z_j = \left[ y_{j,0} + y_{j,1} \geq 2 \right].
$$

\begin{table}[h]
\caption{Definitions of weight $w^j_{i,h}$s and values of relevant variables.}
\label{tbl:definition-w}
\begin{center}
\begin{tabular}{c||c|c||ccc}
\hline
$x^{3i}_j x^{3i+1}_j x^{3i+2}_j$ & $w^j_{i,0}$ & $w^j_{i,1}$ &
$y^{3i}_{j,0} y^{3i+1}_{j,0} y^{3i+2}_{j,0}$ &
$y^{3i}_{j,1} y^{3i+1}_{j,1} y^{3i+2}_{j,1}$ &
$z^{3i}_j z^{3i+1}_j z^{3i+2}_j$ \\
\hline
000 & 000 & 000 & 001 & 010 & 000 \\
001 & 001 & 100 & 001 & 111 & 001 \\
010 & 100 & 001 & 111 & 010 & 010 \\
011 & 101 & 101 & 111 & 111 & 111 \\
100 & 010 & 010 & 101 & 110 & 100 \\
101 & 011 & 110 & 101 & 111 & 101 \\
110 & 110 & 011 & 111 & 110 & 110 \\
111 & 111 & 111 & 111 & 111 & 111 \\
\hline
\end{tabular}
\end{center}
\end{table}
We use $w_h(\ldots)$ ($h=0,1$) to denote the functions
given in the above.
For example, $w_0(010)=100_{(2)}=4$, $w_1(010)=001_{(2)}=1$,
$w_0(110)=110_{(2)}=6$, and $w_1(110)=011_{(2)}=3$,
where $x_{(2)}$ means that $x$ is a binary representation of an integer.

\begin{figure}[ht]
\centering
\includegraphics[width=10cm]{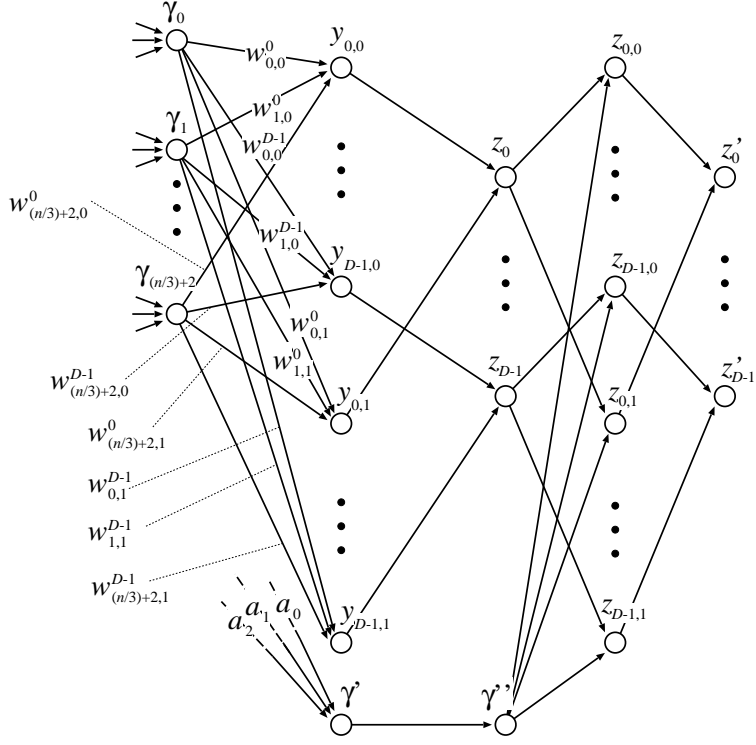}
\caption{Network structure for approximate decoding with $B=3$.}
\label{fig:approx-decode-b3}
\end{figure}

It is seen from this table that an error occurs only when 
$x^{3i}_j x^{3i+1}_j x^{3i+2}_j = 011$.
Therefore, if all bit patterns are distributed uniformly at random,
the expected Hamming distance between $\xvec^i$ and $\yvec^i$
is ${\frac {D}{3 \cdot 2^3}} = {\frac {D}{24}}$.
We can modify the network so that 
the average Hamming distance between  $\xvec^i$ and $\yvec^i$
is always no more than ${\frac {D}{24}}$.

This can be done as follows.
Let 
$
\#_{b_0 b_1 b_2}
= \sum_{j=0}^{D-1} |\{ i | x^{3i}_j x^{3i+1}_j x^{3i+2}_j = b_0 b_1 b_2 \}|$
where $b_0 b_1 b_2 \in \{0,1\}^3 $.
Let $c_0 c_1 c_2 = \mbox{argmin}_{b_0 b_1 b_2} \#_{b_0 b_1 b_2}$,
where the tie can be broken in any way.
Then, we let $a_0 a_1 a_2 = (0 \oplus c_0) (1 \oplus c_1) (1 \oplus c_2)$.
For example. $a_0 a_1 a_2 = 011$ when $c_0 c_1 c_2 = 000$.

Here we define $\chi_{a_0 a_1 a_2}(x_0 x_1 x_2)$ by
$$
\chi_{a_0 a_1 a_2}(x_0 x_1 x_2) = (x_0 \oplus a_0) (x_1 \oplus a_1) (x_2 \oplus a_2).
$$
For example, 
$\chi_{011}(000)=011$, $\chi_{011}(001)=010$, and $\chi_{011}(011)=000$
when $a_0 a_1 a_2 = 011$.
Then, we define $w^j_{i,h}$ by
\begin{eqnarray*}
w^j_{i,0} & = & w_0(\chi_{a_0 a_1 a_2}(x^{3i}_j x^{3i+1}_j x^{3i+2}_j)),\\
w^j_{i,1} & = & w_1(\chi_{a_0 a_1 a_2}(x^{3i}_j x^{3i+1}_j x^{3i+2}_j)).
\end{eqnarray*}
Finally, we define nodes $z_j'$ in the output layer by
$$
z_j' = z_j \oplus ((a_0 \gamma_{n_3}) \lor (a_1 \gamma_{n_3+1}) \lor (a_2 \gamma_{n_3+2})),
$$
which can be realized by adding nodes $\gamma'$, $\gamma''$,
and $z_{j,h}$ ($j=0,\ldots,D-1,h=0,1$) and by
defining the activation functions as:
\begin{eqnarray*}
\gamma' & = & [a_0 \gamma_{n_3} + a_1 \gamma_{n_3+1} + a_2 \gamma_{n_3+2} \geq 1],\\
\gamma'' & = & [\gamma' \geq 1],\\
z_{i,0} & = & [z_i - \gamma'' \geq 1],\\
z_{i,1} & = & [- z_i  + \gamma'' \geq 1],\\
z'_i & = & [z_{i,0} + z_{i,1} \geq 1].
\end{eqnarray*}

For example, when $c_0 c_1 c_2 = 000$, we have the values of
relevant variables as in Table~\ref{tab:c-000}.

\begin{table}[h]
\caption{Values of relevant variables for the case of
$c_0 c_1 c_2 = 000$.}
\label{tab:c-000}
\begin{center}
\begin{tabular}{c||c|c||ccc}
\hline
$x^{3i}_j x^{3i+1}_j x^{3i+2}_j$ & 
$\chi_{0 1 1}(x^{3i}_j x^{3i+1}_j x^{3i+2}_j)$ & 
$z^{3i}_j z^{3i+1}_j z^{3i+2}_j$ &
$(z')^{3i}_j (z')^{3i+1}_j (z')^{3i+2}_j$ \\
\hline
000 & 011 & 111 & 100 \\
001 & 010 & 010 & 001 \\
010 & 001 & 001 & 010 \\
011 & 000 & 000 & 011 \\
100 & 111 & 111 & 100 \\
101 & 110 & 110 & 101 \\
110 & 101 & 101 & 110 \\
111 & 100 & 100 & 111 \\
\hline
\end{tabular}
\end{center}
\end{table}
Indeed, if $x^{3i}_j x^{3i+1}_j x^{3i+2}_j = 010$,
we have $z^{3i}_j z^{3i+1}_j z^{3i+2}_j = 001$ and
$(z')^{3i}_j (z')^{3i+1}_j (z')^{3i+2}_j = 010$ by
the following:
\begin{eqnarray*}
w^j_{i,0} & = & w_0(\chi_{011}(010)) = w_0(001) = 001_{(2)} = 1,\\
w^j_{i,1} & = & w_1(\chi_{011}(010)) = w_1(001) = 100_{(2)} = 4,\\
\mbox{[for $3i$:]} & & \\
y_{j,0} & = & [1-2 \geq 0] = 0,\\
y_{j,1} & = & [4-2 \geq 0] = 1,\\
z_j & = & [0+1 \geq 2] = 0,\\
z_j' & = & 0 \oplus (0 \cdot \gamma_{n_3}) = 0,\\
\mbox{[for $3i+1$:]} & & \\
y_{j,0} & = & [1-4 \geq 0] = 0,\\
y_{j,1} & = & [4-0 \geq 0] = 1,\\
z_j & = & [0+1 \geq 2] = 0,\\
z_j' & = & 0 \oplus (1 \cdot \gamma_{n_3+1}) = 1,\\
\mbox{[for $3i+2$:]} & & \\
y_{j,0} & = & [1-0 \geq 0] = 1,\\
y_{j,1} & = & [4-4 \geq 0] = 1,\\
z_j & = & [1+1 \geq 2] = 1,\\
z_j' & = & 1 \oplus (1 \cdot \gamma_{n_3+2}) = 0.
\end{eqnarray*}

If $n$ can be divided by 3,
the total error will be at most ${\frac {nD}{3 \cdot 2^3}}={\frac {nD}{24}}$.
Otherwise, 
the total error will be at most ${\frac {nD}{3 \cdot 2^3}}={\frac {nD}{24}} + D$
because there might be an additional error per $j$.
Since the maximum width is given by
$\max(\lceil n/3 \rceil + 3,2D+1)$,
we have the following theorem.

\begin{theorem}
\label{thm:approx-b-3}
For any perfect encoder that maps $X_n$
one-to-one
to $d$-dimensional binary vectors,
with $d=\lceil \log n \rceil$ and $n$ sufficiently large,
there exists a decoder of a constant depth and width
$\max(\lceil n/3 \rceil + 3,2D+1)$ whose average Hamming distance error
is at most $D({\frac {1}{24}}+{\frac 1 n})$.
\end{theorem}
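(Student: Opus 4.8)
The plan is to verify that the network constructed above has the claimed depth and width, and then to bound its error. For the structural claims I would read off the layers: the middle layer $\beta$ feeds a $\gamma$-layer of $n_3+3$ nodes; the $\gamma$'s (together with $\gamma'$) feed the $2D$ nodes $y_{j,h}$; these feed the $z_j$'s (and $\gamma''$), then the $z_{j,h}$'s, and finally the $D$ output nodes $z'_j$. The widest interior layer is therefore either the $\gamma$-layer ($n_3+3$ nodes) or one of the $2D$-node layers augmented by a single auxiliary node, giving width $\max(\lceil n/3\rceil+3,\,2D+1)$; the number of layers is a fixed constant (the depth-5 subcircuits computing $\lfloor Int(\beta)/3\rfloor$ and $Int(\beta)\bmod 3$ add only a constant, since $3$ is not a power of two). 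This disposes of the constant-depth and width assertions.

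The heart of the argument is correctness of the base decoder, i.e.\ the reconstruction displayed in the last column of Table~\ref{tbl:definition-w}. I would argue as follows. Writing $k=3\ell+r$ with $0\le r\le 2$, on input $\xvec^k$ exactly one group-indicator ($\gamma_\ell$) and one residue-indicator ($\gamma_{n_3+r}$) are active, so each threshold sum defining $y_{j,h}$ collapses to $w_h(t)+w^j_{n_3+r,h}$, where $t=x^{3\ell}_j x^{3\ell+1}_j x^{3\ell+2}_j$ is the relevant triple. The residue weights $-2,-4,0$ turn this into the comparisons $[w_h(t)\ge 2]$ when $r=0$ and into a one-sided test when $r=1,2$; taking the AND $z_j=[y_{j,0}+y_{j,1}\ge 2]$ and running through the eight rows of Table~\ref{tbl:definition-w} shows that $z_j$ equals the $r$-th bit of $t$ in every case except $t=011$, where the bit in position $3\ell$ is flipped. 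Hence the base decoder makes exactly one bit error, and only on the pattern $011$.

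Next I would analyse the error-minimizing modification. Replacing each data weight $w_h(t)$ by $w_h(\chi_{a_0a_1a_2}(t))=w_h(t\oplus a)$ makes the base decoder reconstruct the shifted triple $t\oplus a$ (again with a single error exactly when $t\oplus a=011$), after which the output layer $z'_j=z_j\oplus a_r$ XORs the residue-dependent correction back in, recovering $t_r=x^k_j$ whenever the shifted decoding is correct. Since $a=011\oplus c$, the failing shifted pattern $011$ corresponds to the original triple $c=c_0c_1c_2$, so the total number of flipped bits equals $\#_{c_0c_1c_2}$. As the eight counts $\#_{b_0b_1b_2}$ sum to $D\,n_3=Dn/3$, the minimizing choice obeys $\#_{c_0c_1c_2}\le (Dn/3)/8=Dn/24$.

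Finally I would collect the estimates. When $3\mid n$ the total Hamming error is at most $Dn/24$; otherwise the incomplete last group contributes at most one extra flip per coordinate, i.e.\ at most $D$ more, for a total of at most $Dn/24+D$. Dividing by $n$ yields the stated average bound $D(\tfrac1{24}+\tfrac1n)$. The step I expect to require the most care is the correctness verification of the second and third paragraphs: one must check that the particular data weights of Table~\ref{tbl:definition-w}, combined with the residue-selector weights and the final AND/XOR layers, decode every triple correctly save for $011$, and that the $z'$-layer inverts the shift $\chi_a$ exactly rather than introducing new errors.
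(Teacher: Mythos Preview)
Your proposal is correct and follows essentially the same route as the paper: you read off the layer structure of the modified $B=3$ decoder to obtain the width $\max(\lceil n/3\rceil+3,\,2D+1)$ and constant depth (absorbing the depth-5 division/modulo subcircuit), verify via Table~\ref{tbl:definition-w} that the base decoder errs on exactly one bit and only for the triple $011$, argue that the $\chi_a$-shift plus the final XOR with $a_r$ relocates this single failing pattern to the least frequent triple $c$, and conclude by the pigeonhole estimate $\#_c\le Dn/24$ together with the $+D$ slack for the incomplete group. One cosmetic remark: $\gamma'$ is fed by the $\gamma$'s and therefore sits in the same layer as the $y_{j,h}$ (giving that layer $2D+1$ nodes), rather than feeding the $y_{j,h}$ as your wording suggests; this does not affect your width count.
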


\subsection{General Case}
\label{sec:approx-general}

As in Section~\ref{sec:perfect-decoder},
we define $n_B=n/B$ and assume w.l.o.g. that $n$ can be divided by $B$.
First, we define $\gamma_i$ nodes ($i=0,\ldots,n_{B}+B-1$) by
\begin{eqnarray*}
\gamma_i & = & [ \lfloor {\frac {Int(\beta)}{B}} \rfloor = i],~~~~
\mbox{for $i=0,\ldots,n_B-1$},\\
\gamma_{n_B+k} & = & [ Int(\beta) = k ~(\mbox{mod}~B)],~~~~
\mbox{for $k=0,\ldots,B-1$}.
\end{eqnarray*}

We construct $y_{j,h}$ nodes ($j=0,\ldots,D-1$, $h=0,\ldots,B-2$) 
and $z_j$ nodes ($j=0,\ldots,D-1$) with the activation functions:
\begin{eqnarray*}
y_{j,h} & = & \left[ \sum_{i=0}^{n_3+B-1} w^j_{i,h} \gamma_i \geq \theta_h \right],\\
z_j & = & \left[ \sum_{h=0}^{B-2} y_{0,h} ~\geq~ B-1 \right].
\end{eqnarray*}

We define $w_{n_B+i,h}$s ($i=0,\ldots.B-1$, $h=0,\ldots,B-2$) by
\begin{eqnarray*}
w_{n_B+i,h} & = & 
\left\{
\begin{array}{ll}
-{\frac {2^B}{4}}, & \mbox{if $i=0$ and $h \in \{0,\ldots,B-2\}$},\\
-{\frac {2^B}{2}}, & \mbox{if $i=h+1$ and $h \in \{0,\ldots,B-2\}$},\\
0, & \mbox{otherwise},
\end{array}
\right.
\end{eqnarray*}

For integer $k \in \{0,\ldots,2^B-1\}$,
let $k_0 k_1 \cdots k_{B-1}$ be the binary representation of $k$
(i.e., $k = Int(k_0 k_1 \cdots k_{B-1})$),
where $k_0$ is the most significant bit.
We define $w_h(k)$ ($h=0,\ldots,B-2$) by
\begin{eqnarray*}
w_0(k_0 k_1 k_2 k_3 \cdots k_{B-1}) & = &
Int(k_{1} k_0 k_2 k_3 \cdots k_{B-1}),\\
w_1(k_0 k_1 k_2 k_3 \cdots k_{B-1}) & = &
Int(k_{2} k_0 k_1 k_3 \cdots k_{B-1}),\\
w_2(k_0 k_1 k_2 k_3 \cdots k_{B-1}) & = &
Int(k_{3} k_0 k_1 k_2 \cdots k_{B-1}),\\
& \cdots &\\
w_{B-2}(k_0 k_1 k_2 k_3 \cdots k_{B-1}) & = &
Int(k_{B-1} k_0 k_1 \cdots k_{B-2}).
\end{eqnarray*}
Then, we define $w^j_{i,h} = w_h(x^{Bi}_j x^{Bi+1}_j \cdots x^{Bi+B-1}_j)$
for $j=0,\ldots,D-1$, $i=0,\ldots,n_B-1$, and $h=0,\ldots,B-2$.
The following proposition is straightforward from
the definitions of $y_{i,h}$s, $w_{n_B+i,h}$s, $w_h(\cdots)$s, and $z_j$s.

\begin{proposition}
If $x^{Bi}_j x^{Bi+1}_j x^{Bi+2}_j \cdots x^{Bi+B-1}_j =
011 \cdots 1$,
then $z^{Bi}_j z^{Bi+1}_j z^{Bi+2}_j \cdots z^{Bi+B-1}_j =
111 \cdots 1$ holds.
Otherwise,
$z^{Bi}_j z^{Bi+1}_j z^{Bi+2}_j \cdots z^{Bi+B-1}_j = 
x^{Bi}_j x^{Bi+1}_j x^{Bi+2}_j \cdots x^{Bi+B-1}_j$
 holds.
\end{proposition}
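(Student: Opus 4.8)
The plan is to fix a single block index $i$ and a single output coordinate $j$, and to follow the value of $z_j$ as the input ranges over $\xvec^{Bi}, \xvec^{Bi+1}, \ldots, \xvec^{Bi+B-1}$. Abbreviate the relevant block of input bits by $x_p := x^{Bi+p}_j$ for $p=0,\ldots,B-1$. The first step is the observation that on input $\xvec^{Bi+r}$ exactly two $\gamma$-nodes are active, namely $\gamma_i = 1$ (since $\lfloor Int(\beta)/B \rfloor = i$) and $\gamma_{n_B+r} = 1$ (since $Int(\beta) \equiv r \pmod{B}$), all other $\gamma$'s being $0$. Consequently the weighted sum driving $y_{j,h}$ collapses to the two-term quantity $w_h(x_0 x_1 \cdots x_{B-1}) + w_{n_B+r,h}$, and $y_{j,h}$ is just the indicator that this quantity is nonnegative.

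Next I would read off the bit layout of $w_h$ from its definition $w_h(x_0 \cdots x_{B-1}) = Int(x_{h+1} x_0 x_1 \cdots x_h x_{h+2} \cdots x_{B-1})$: its most significant bit (weight $2^{B-1}$) is $x_{h+1}$, its next bit (weight $2^{B-2}$) is $x_0$, and the remaining bits contribute a value strictly below $2^{B-2}$. This precise placement of $x_{h+1}$ and $x_0$ is exactly what turns the two correction weights into clean comparators.

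The computation then splits on the position $r$ within the block. For $r \geq 1$ the only nonzero correction is $w_{n_B+r,r-1} = -2^{B-1}$, so for every $h \neq r-1$ we get $y_{j,h} = [w_h \geq 0] = 1$, while $y_{j,r-1} = [w_{r-1} - 2^{B-1} \geq 0]$ equals the top bit of $w_{r-1}$, which is $x_r$; summing over $h$ gives $\sum_h y_{j,h} = (B-2) + x_r$, so $z_j = [\,\sum_h y_{j,h} \geq B-1\,]$ evaluates to $x_r$, with no exceptional behaviour. For $r = 0$ every correction equals $-2^{B-2}$, hence $y_{j,h} = [w_h - 2^{B-2} \geq 0]$, and since the top two bits of $w_h$ are $x_{h+1}$ and $x_0$ while the rest stay below $2^{B-2}$, this simplifies to $y_{j,h} = x_{h+1} \lor x_0$. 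Therefore $z_j = 1$ precisely when all $B-1$ of these disjunctions hold, i.e. when $x_0 = 1$, or $x_0 = 0$ together with $x_1 = \cdots = x_{B-1} = 1$.

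Assembling the two cases yields the proposition. If $x_0 x_1 \cdots x_{B-1} = 011\cdots1$, then the $r = 0$ branch returns $z_j = 1$ even though $x_0 = 0$, and each $r \geq 1$ branch returns $z_j = x_r = 1$, so the whole decoded block $z^{Bi}_j \cdots z^{Bi+B-1}_j$ is $11\cdots1$. In every other block pattern the $r = 0$ branch returns $x_0$ and each $r \geq 1$ branch returns $x_r$, so the decoded block reproduces the input block bit for bit. The only delicate point is the bit-position bookkeeping: one must verify that the discarded low-order bits of $w_h$ can never reach $2^{B-2}$ (in the $r = 0$ case) or $2^{B-1}$ (in the $r \geq 1$ case), so that the sign of each threshold is decided solely by the designated high-order bit. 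This is precisely the reason the correction weights are chosen as $2^B/4$ and $2^B/2$ rather than any other magnitudes.
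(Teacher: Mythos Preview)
Your argument is correct and is exactly the verification the paper has in mind: it merely asserts that the proposition ``is straightforward from the definitions of $y_{i,h}$s, $w_{n_B+i,h}$s, $w_h(\cdots)$s, and $z_j$s'' without spelling out the case analysis, and your proof is precisely that straightforward unpacking. The split into $r=0$ versus $r\geq 1$, the identification of $y_{j,r-1}=x_r$ in the latter case and $y_{j,h}=x_{h+1}\lor x_0$ in the former, and the resulting characterization of the single error pattern $011\cdots1$ are all accurate.
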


As in Section~\ref{sec:approx-b-3},
it is seen from this proposition that if all bit
patterns are distributed uniformly at random,
the expected Hamming distance between $\xvec^i$ and $\yvec^i$
is ${\frac {D}{B 2^B}}$.
We can modify the network so that 
the average Hamming distance between  $\xvec^i$ and $\yvec^i$
is always no more than ${\frac {D}{B 2^B}}$ as well.
The modification method is given below, which is almost the same as that in
Section~\ref{sec:approx-b-3}.

Let 
$
\#_{b_0 b_1 \cdots b_{B-1}}
= \sum_{j=0}^{D-1} |\{ i | x^{Bi}_j x^{Bi+1}_j \cdots x^{Bi+B-1}_j = b_0 b_1 \cdots b_{B-1} \}|$
where $b_0 b_1 \cdots b_{B-1} \in \{0,1\}^B $.
Let $c_0 c_1 \cdots c_{B-1} =
\mbox{argmin}_{b_0 b_1 \cdots b_{B-1}} \#_{b_0 b_1 \cdots b_{B-1}}$,
where the tie can be broken in any way.
Then, we let $a_0 a_1 \cdots a_{B-1} =
(0 \oplus c_0) (1 \oplus c_1) \cdots (1 \oplus c_{B-1})$.
For example. $a_0 a_1 a_2 a_3 = 0110$ when $c_0 c_1 c_2 c_3 = 0001$.

Here we define $\chi_{a_0 a_1 \cdots a_{B-1}}(x_0 x_1 \cdots x_{B-1})$ by
$$
\chi_{a_0 a_1 \cdots a_{B-1}}(x_0 x_1 \cdots x_{B-1}) = (x_0 \oplus a_0) (x_1 \oplus a_1) \cdots (x_{B-1} \oplus a_{B-1}).
$$
Then, we define $w^j_{i,h}$ by
\begin{eqnarray*}
w^j_{i,h} & = & w_h(\chi_{a_0 a_1 \cdots a_{B-1}}(x^{Bi}_j x^{Bi+1}_j \cdots x^{Bi+B-1}_j)),
\end{eqnarray*}
for $j=0,\ldots,D-1$, $i = 0,\ldots,n_B-1$, and $h=0,\ldots,B-2$.
Finally, we define nodes $z_j'$ in the output layer by
$$
z_j' = z_j \oplus ( \bigvee_{h=0}^{B-1} a_h \gamma_{n_B+h}),
$$
which can be realized by adding nodes $\gamma'$, $\gamma''$,
and $z_{j,h}$ ($j=0,\ldots,D-1,h=0,1$) and defining the activation functions by:
\begin{eqnarray*}
\gamma' & = & [\sum_{h=0}^{B-1} a_h \gamma_{n_B+h} ~\geq~ 1],\\
\gamma'' & = & [\gamma' \geq 1],\\
z_{i,0} & = & [z_i - \gamma'' \geq 1],\\
z_{i,1} & = & [- z_i  + \gamma'' \geq 1],\\
z'_i & = & [z_{i,0} + z_{i,1} \geq 1].
\end{eqnarray*}

Since the maximum width is given by
$\max(\lceil n/B \rceil + B,(B-1)D+1)$,
we have the following theorem.

\begin{theorem}
\label{thm:approx-general}
For any perfect encoder that maps $X_n$
one-to-one
to $d$-dimensional binary vectors,
with $d=\lceil \log n \rceil$ and sufficiently large $n$,
there exists a decoder of a constant depth and width
$\max(\lceil n/B \rceil + B,(B-1)D+1)$ whose average Hamming distance error
is at most $D({\frac {1}{B 2^B}}+{\frac 1 n})$,
where $B$ is any integer such that $3 \leq B \leq n$.
\end{theorem}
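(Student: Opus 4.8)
The plan is to read the error bound and the width/depth claims directly off the construction that precedes the statement, using the Proposition as the key structural fact. The starting observation is that, in the \emph{unmodified} network, the only input block pattern producing a wrong output is $011\cdots1$, and that this pattern is decoded as $111\cdots1$, which differs from it in exactly one coordinate; hence each block--coordinate pair $(i,j)$ whose pattern is $011\cdots1$ contributes exactly $1$ to the total Hamming distance. The whole point of the modification is to relocate this single bad pattern from $011\cdots1$ to the globally least frequent pattern $c_0\cdots c_{B-1}$, while keeping every other pattern decoded exactly.

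First I would verify that the modified decoder is correct on every pattern except $c$. Since the weights now use $w_h(\chi_{a_0\cdots a_{B-1}}(x^{Bi}_j\cdots x^{Bi+B-1}_j))$, the map from the transformed block $\chi_a(p)$ to the intermediate output block $z$ is precisely the map of the Proposition (only the weights were pre-composed with $\chi_a$; the threshold computation is unchanged). Thus $z$ equals $\chi_a(p)$ unless $\chi_a(p)=011\cdots1$; and because $a_0=c_0$ and $a_h=1\oplus c_h$ for $h\geq1$, one checks $\chi_a(c)=011\cdots1$, so the exceptional case is exactly $p=c$. When processing $\xvec^k$ with $k=B\ell+r$ a single modulo-indicator $\gamma_{n_B+r}$ fires, so the correction term $\bigvee_{h=0}^{B-1}a_h\gamma_{n_B+h}$ equals $a_r$ and the output is $z'_j=z_j\oplus a_r$. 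For $p\neq c$ this gives $z'_j=(p_r\oplus a_r)\oplus a_r=p_r$, a perfect recovery; for $p=c$ the intermediate block is $111\cdots1$, so $z'_j=1\oplus a_r$, which matches $c_r$ at every position $r\geq1$ and disagrees only at $r=0$. So, exactly as before, each occurrence of the relocated bad pattern costs precisely one bit.

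Next I would bound the total error. By the previous step the total Hamming distance equals $\#_{c_0\cdots c_{B-1}}$, and since $c$ is chosen as the $\mathrm{argmin}$ over all $2^B$ patterns, this count is at most the average $\frac{1}{2^B}\sum_b\#_b$. Every one of the $D\cdot n_B$ block--coordinate pairs contributes to exactly one pattern, so $\sum_b\#_b=Dn_B=Dn/B$, giving total error at most $\frac{Dn}{B2^B}$ and hence average error at most $\frac{D}{B2^B}$. If $B\nmid n$, the final partial block can introduce at most one extra error per coordinate $j$, i.e. at most $D$ in total, adding $\frac{D}{n}$ to the average and yielding the claimed bound $D\!\left(\frac{1}{B2^B}+\frac1n\right)$.

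Finally I would check the width and depth. The $\gamma$-layer has $\lceil n/B\rceil+B$ nodes; the $y_{j,h}$-layer together with $\gamma'$ has $(B-1)D+1$ nodes; and each subsequent layer ($z_j$ with $\gamma''$, then $z_{i,0},z_{i,1}$, then the output $z'_i$) has at most $2D$ nodes, which is dominated by $(B-1)D+1$ because $B\geq3$. Hence the width is $\max(\lceil n/B\rceil+B,(B-1)D+1)$ and the depth is a fixed constant. The only place the hypothesis of sufficiently large $n$ is used is the realization of the $\gamma_i$: for general $B$ these require division and modulo by $B$, computable by constant-depth threshold circuits of width $\mathrm{poly}(\log n)$ by \cite{siu93}, a negligible contribution for large $n$ (and avoidable when $B=2^K$). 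I expect the only genuinely delicate step to be the bookkeeping of the second paragraph, namely confirming that the $\chi_a$ reweighting and the correction $z'_j=z_j\oplus a_r$ together move the unique error pattern onto $c$ while decoding every other pattern exactly.
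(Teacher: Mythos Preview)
Your proposal is correct and follows essentially the same approach as the paper: read the error bound from the Proposition, relocate the unique bad pattern $011\cdots1$ onto the least frequent pattern $c$ via the $\chi_a$ reweighting and the final XOR correction, then bound $\#_c$ by the average and account for the partial block and the auxiliary division/modulo circuitry. In fact your verification that the correction step decodes every $p\neq c$ exactly and errs by a single bit on $p=c$ is more explicit than the paper's own treatment, which simply asserts that the modification is ``almost the same'' as the $B=3$ case.
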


Note that this theorem is a generalized version of
Theorem~\ref{thm:approx-b-3} and
the resulting width is smaller than that of Theorem~\ref{thm:perfect-decoder}
for all $B \geq 3$ and $\lceil n/B \rceil + B > BD$
(i.e., $D < n'$ for $n' \approx n/B^2$)
although the corresponding decoder is an approximate one and uses more layers.
Note also that if $B = 2^K$, the architecture of the decoder 
can be explicitly described as $d/\lceil n/B \rceil + B/(B-1)D+1/D+1/2D/D$
where $d = \lceil \log n \rceil$.
Furthermore, if we can consider the average case error
over binary input vectors given uniformly at random,
this architecture can be reduced to $d/\lceil n/B \rceil + B/(B-1)D/D$.

\section{Concluding Remarks}

In this paper, we showed an improved upper bound on the size/width
of the decoder in a Boolean threshold autoencoder.
We also showed a lower bound on the size of the decoder.
Although these bounds are relatively close, there still exists a gap.
Therefore, closing the gap is left as an open problem as well as
giving a lower bound on the size/width of the encoder part.
We also
constructed decoders that are permitted to make small errors.
However, the widths of these decoders are not significantly smaller than
those of exact decoders.
Therefore, improvement of decoders allowing errors is left as future work.

%
%
%

\bibliographystyle{plain}

\bibliography{autowidtharxiv}


\end{document}